\newtheorem{theorem}{Theorem}
\newtheorem{assumption}{Assumption}
\newcommand{\param}{\beta}
\newcommand{\vr}{V^{\param}_{\theta, \omega}}
\newcommand{\vrb}{V^{\param}_{\theta,\omega}}
\newcommand{\s}{\mathcal{S}}
\newcommand{\p}{\mathcal{P}}
\newcommand{\pol}{\pi}
\newcommand{\A}{\mathcal{A}}
\newcommand{\R}{\mathbb{R}}
\newcommand{\T}{\mathcal{T}}
\newcommand{\V}{V_{\theta}}
\newcommand{\expect}{\mathop{\mathbb{E}}}
\newcommand{\RE}{r}
\newcommand{\norm}[1]{\left\lVert#1\right\rVert}
\title{Recurrent Value Functions}
\author{
Pierre Thodoroff {\thanks{Equal contribution} \space \thanks{McGill Univeristy - Mila, Montreal, Quebec}} \\
\texttt{pierthodo@gmail.com} \\
\And
Nishanth Anand {\footnotemark[1] \space \footnotemark[2]} \\
\texttt{nishanth.anand@mail.mcgill.ca} \\
\AND
Lucas Caccia {\footnotemark[2]}\\
\texttt{lucas.page-caccia@mail.mcgill.ca} \\
\And
Doina Precup {\footnotemark[2] \space \thanks{Deepmind, Montreal}}\\
\texttt{dprecup@cs.mcgill.ca} \\
\And
Joelle Pineau {\footnotemark[2] \space \thanks{Facebook AI Research, Montreal}}\\
\texttt{jpineau@cs.mcgill.ca} \\
}
\begin{document}

\maketitle

\begin{abstract}
Despite recent successes in Reinforcement Learning, value-based methods often suffer from high variance hindering performance. In this paper, we illustrate this in a continuous control setting where state of the art methods perform poorly whenever sensor noise is introduced. To overcome this issue, we introduce Recurrent Value Functions (RVFs) as an alternative to estimate the value function of a state. We propose to estimate the value function of the current state using the value function of past states visited along the trajectory. Due to the nature of their formulation, RVFs have a natural way of learning an emphasis function that selectively emphasizes important states. First, we establish RVF's asymptotic convergence properties in tabular settings. We then demonstrate their robustness on a partially observable domain and continuous control tasks. Finally, we provide a qualitative interpretation of the learned emphasis function.
\end{abstract}

\section{Introduction}
Model-free Reinforcement Learning (RL) is a widely used framework for sequential decision making in many domains such as robotics \cite{kober2013reinforcement,abbeel2010autonomous} and video games \cite{vinyals2017starcraft,mnih2013playing,mnih2016asynchronous}. However, its use in the real-world remains limited due, in part, to the high variance of value function estimates \cite{greensmith2004variance}, leading to poor sample complexity \cite{glascher2010states,kakade2003sample}. This phenomenon is exacerbated by the noisy conditions of the real-world \cite{fox2015taming,pendrith1994reinforcement}. Real-world applications remain challenging as they often involve noisy data such as sensor noise and partially observable environments.

The problem of disentangling signal from noise in sequential domains is not specific to Reinforcement Learning and has been extensively studied in the Supervised Learning literature. In this work, we leverage ideas from time series literature and Recurrent Neural Networks to address the robustness of value functions in Reinforcement Learning. We propose Recurrent Value Functions (RVFs): an exponential smoothing of the value function. The value function of the current state is defined as an exponential smoothing of the values of states visited along the trajectory where the value function of past states are summarized by the previous RVF.

However, exponential smoothing along the trajectory can result in a bias when the value function changes dramatically through the trajectory (non-stationarity). This bias could be a problem if the environment encounters sharp changes, such as falling of a cliff, and the estimates are heavily smoothed. To alleviate this issue, we propose to use exponential smoothing on value functions using a trainable state-dependent emphasis function which controls the smoothing coefficients. Intuitively, the emphasis function adapts the amount of emphasis required on the current value function and the past RVF to reduce bias with respect to the optimal value estimate. In other words, the emphasis function identifies important states in the environment. An important state can be defined as one where \emph{its value differs significantly from the previous values along the trajectory}. For example, when falling off a cliff, the value estimate changes dramatically, making states around the cliff more salient. This emphasis function serves a similar purpose to a gating mechanism in a Long Short Term Memory cell of a Recurrent Neural Network \cite{hochreiter1997long}. 

To summarize the contributions of this work, we introduce RVFs to estimate the value function of a state by exponentially smoothing the value estimates along the trajectory. RVF formulation leads to a natural way of learning an emphasis function which mitigates the bias induced by smoothing. We provide an asymptotic convergence proof in tabular settings by leveraging the literature on asynchronous stochastic approximation \cite{tsitsiklis1994asynchronous}. Finally, we perform a set of experiments to demonstrate the robustness of RVFs with respect to noise in continuous control tasks and provide a qualitative analysis of the learned emphasis function which provides interpretable insights into the structure of the solution.

\section{Technical Background}
A Markov Decision Process (MDP), as defined in \cite{puterman2014markov}, consists of a discrete set of states $\s$, a transition function $\p : \s \times \A \times \s \mapsto [0,1]$, and a reward function $\RE : \s \times \A \mapsto \R$. In each round $t$, the learner observes current state $s_t\in\s$ and selects an action $a_t\in\A$. As a response, it receives a reward $r_t = \RE(s_t,a_t)$ and moves to a new state $s_{t+1}\sim \p(\cdot|s_t, a_t)$. We define a stationary policy $\pol$ as a probability distribution over actions conditioned on states $\pol : \s \times \A \mapsto [0,1]$, such that $a_t \sim \pol(\cdot|s_t)$. 
In policy evaluation, the goal is to find the optimal value function $V^{\pol}$ that estimates the discounted expected return of a policy $\pol$ at a state $s\in\s$, $V^{\pol}(s) =\expect_{\pol}[\sum_{t=0}^{\infty} \gamma^t \RE_{t+1} | s_0 = s]$, with discount factor $\gamma \in [0,1)$. In this paper, we only consider policy evaluation and simplify the model: $r(s_t)=r(s_t,a_t)$.

In practice, $V^{\pol}$ is approximated using Monte Carlo rollouts \cite{suttonreinforcement} or TD methods \cite{sutton1988learning}. For example, the target used in TD(0) is $\expect_{s' \sim \pol}[r(s) + \gamma V^{\pol}(s')]]$. In Reinforcement Learning, the aim is to find a function $V_\theta: \mathbb{S} \rightarrow \mathbb{R}$ parametrized by $\theta$ that approximates $V^{\pol}$. We thus learn a set of parameters $\theta$ that minimizes the squared loss:
\begin{equation}\label{simple_loss}
    \mathcal{L}(\theta) = \expect_{\pol} [(V^{\pol} - V_\theta)^2],
\end{equation}
which yields the following update on the parameters $\theta$ by taking the derivative with respect to $\theta$:
\begin{equation}
    \theta_{t+1} = \theta_t + \alpha (V^{\pol}(s_t) - V_{\theta_t}(s_t)) \nabla_{\theta_t} \V(s_t),
\end{equation}
where $\alpha$ is a learning rate.

\section{Recurrent Value Functions (RVFs)}
\label{RLRL}
As mentioned earlier, performance of value-based methods are often heavily impacted by the quality of the data obtained \cite{fox2015taming,pendrith1994reinforcement}. For example, in robotics, noisy sensors are common and can significantly hinder performance of popular methods \cite{romoff2018reward}. In this work, we propose a method to improve the robustness of value functions by estimating the value of a state $s_t$ using the estimate at time step $t$ and the estimates of previously visited states $s_i$ where $i < t$.
Mathematically, the Recurrent Value Function (RVF) of a state $s$ at time step $t$ is given by:
\begin{equation}
\label{value}
\begin{split}
    V^{\param}(s_t) &= \param(s_t) V(s_t) + (1 - \param(s_t)) V^{\param}(s_{t-1}), \\ 
\end{split}
\end{equation}
where  $\param(s_t) \in [0,1]$. $V^{\param}$ estimates the value of a state $s_t$ as a convex combination of current estimate $V(s_t)$ and previous estimate $V^{\param}(s_{t-1})$. $V^{\param}(s_{t-1})$ can be recursively expanded further, hence the name Recurrent Value Function. $\param$ is the emphasis function which updates the recurrent value estimate. 

In contrast to traditional methods that attempt to minimize Eq. \ref{simple_loss}, the goal here is to find a set of parameters $\theta,\omega$ that minimize the following error:
\begin{equation}
\begin{split}
    \mathcal{L}(\theta,\omega) &= \expect_{\pol} [(V^{\pol} - \vr)^2], \\
	 \vr(s_t) &= \param_{\omega}(s_t) \V(s_t) + (1 - \param_{\omega}(s_t)) (\vr(s_{t-1})),
\end{split}
\label{loss_beta}
\end{equation}
where $\V$ is a function parametrized by $\theta$, and $\param_{\omega}$ is a function parametrized by $\omega$.
This error is similar to the traditional error in Eq. \ref{simple_loss}, but we replace the value function with $\vr$. In practice, $V^{\pol}$ can be any target such as TD(0), TD(N), TD($\lambda$) or Monte Carlo \cite{sutton1998reinforcement} which is used in Reinforcement Learning.
We minimize Eq. \ref{loss_beta} by updating $\theta$ and $\omega$ using the semi-gradient technique which results in the following update rule:
\begin{equation}
\begin{split}
	\theta &= \theta + \alpha \delta_t  \nabla_{\theta} \vr (s_t), \\
	\omega &= \omega + \alpha \delta_t  \nabla_{\omega} \vr (s_t),
\end{split}
\end{equation}
where $\delta_t = V^{\pol}(s_t) - \vr(s_t)$ is the TD error with RVF in the place of the usual value function. The complete algorithm using the above update rules can be found in Algorithm \ref{algo_rtd}.

\begin{algorithm}[H]
\caption{Recurrent Temporal Difference(0)}
\begin{spacing}{1.2}
\begin{algorithmic}[1]
    \label{RTD}
    \State Input: $\pi$,$\gamma$,$\theta$,$\omega$
    \State Initialize: $\vr(s_0) = \V(s_0)$
    \State Output: $\theta,\omega$ \Comment{Return the learned parameters}
    \For{t}
        \State Take action $a \sim \pi(s_t)$ , observe $r(s_t),s_{t+1}$
        \State $\vr(s_t) = \param_{\omega}(s_t) \V(s_t) + (1-\param_{\omega}(s_t)) \vr (s_{t-1})$ \Comment{Compute the RVF}
        \State $\delta_t = V^{\pol}(s_t) - \vr(s_t)$ \Comment{Compute TD error with respect to RVF}
        \State $\theta = \theta + \alpha \delta_t \nabla_{\theta} \vr(s_t) $ \Comment{Update parameters of the value function $\theta$}
        \State $\omega = \omega + \alpha \delta_t \nabla_{\omega} \vr(s_t)$ \Comment{Update parameters of the emphasis function $\omega$}
    \EndFor
\end{algorithmic}
\label{algo_rtd}
\end{spacing}
\end{algorithm}

As discussed earlier, $\param_{\omega}$ learns to identify states whose value significantly differs from previous estimates. While optimizing for the loss function described in Eq. \ref{loss_beta}, the $\param_{\omega}(s_t)$ learns to bring the RVF $\vr$ closer to the target $V^{\pol}$. It does so by placing greater emphasis on whichever is closer to the target, either $\V(s_t)$ or $\vr(s_{t-1})$. Concisely, the updated behaviour can be split into four scenarios. A detailed description of these behaviours is provided in Table \ref{sample-table}. Intuitively, if the past is not aligned with the future, $\param$ will emphasize the present. Likewise, if the past is aligned with the future, then $\param$ will place less emphasis on the present. This behaviour is further explored in the experimental section.

\begin{table}[h]
\caption{Behaviour of $\param$ based on the loss}
\centering
\begin{tabular}{c|c c } 
  & \small{$V^{\pol}(s_t)>\vr(s_t)$} & \small{$V^{\pol}(s_t)<\vr(s_t)$} \\ 
   \hline
 \small{$\V(s_t)>\vr(s_{t-1})$} & $\param \uparrow$ &  $\param \downarrow$ \\ 
 \small{$\V(s_t)<\vr(s_{t-1})$} & $\param \downarrow$ & $\param \uparrow$  \\ 
\end{tabular}
\vspace{7pt}
\label{sample-table}
\end{table}

Note that, the gradients of $\vr$ take a recursive form (gradient through time) as shown in Eq. \ref{grad_rvf}. The gradient form is similar to LSTM \cite{hochreiter1997long}, and GRU \cite{chung2014empirical} where $\param$ acts as a gating mechanism that controls the flow of gradient. LSTM uses a gated exponential smoothing function on the hidden representation to assign credit more effectively. In contrast, we propose to exponentially smooth the outputs (value functions) directly rather than the hidden state. This gradient can be estimated using backpropagation through time by recursively applying the chain rule where:
 \begin{equation}
\label{grad_rvf}
\nabla_{\theta} \vr (s_t) = \param_{\omega}(s_t) \nabla_{\theta} \V(s_t) + (1-\param_{\omega}(s_t)). \nabla_{\theta} \vr(s_{t-1})
\end{equation}
However, this can become computationally expensive in environments with a large episodic length, such as continual learning. Therefore, we could approximate the gradient $\nabla_{\theta}\vr(s_{t})$ using a recursive \emph{eligibility} trace:.
\begin{equation}
\label{trace}
    e_t = \param_{\omega}(s_t) \nabla_{\theta} \V (s_t) + (1-\param_{\omega}(s_t)) e_{t-1}.
\end{equation}

In the following section, we present the asymptotic convergence proof of RVF.

\subsection*{Asymptotic convergence}
For this analysis, we consider the simplest case: a tabular setting with TD(0) and a fixed set of $\param$. In the tabular setting, each component of $\theta$ and $\omega$ estimates one particular state, allowing us to simplify the notation. In this section, we simplify the notation by dropping $\theta$ and $\omega$ such that $\V(s_t) = V(s_t)$ and $\param_{\omega}(s_t) = \param_t$.
In the tabular setting, convergence to the fixed point of an operator is usually proven by casting the learning algorithm as a stochastic approximation \cite{tsitsiklis1994asynchronous,borkar2009stochastic,borkar2000ode} of the form:
\begin{equation}\label{stoch}
    \theta_{t+1} = \theta_t + \alpha( \T \theta_t  - \theta_t + w(t)), 
\end{equation}
where $\T: \mathbb{R^{|\mathbb{S}|}} \rightarrow \mathbb{R^{|\mathbb{S}|}}$ is a contraction operator and $w(t)$ is a noise term. 
The main idea is to cast the Recurrent Value Function as an asynchronous stochastic approximation \cite{tsitsiklis1994asynchronous} with an additional regularization term. By bounding the magnitude of this term, we show that the operator is a contraction. The algorithm is asynchronous because the eligibility trace only updates certain states at each time step. 

We consider the stochastic approximation formulation described in Eq. \ref{stoch} with the following operator $\T^{\param}: \mathbb{R^{|\mathbb{S}|}} \rightarrow \mathbb{R^{|\mathbb{S}|}}$ for any $i \leq t$:
\begin{equation}
    \T^{\param} V(s_i) = \expect_{\pol}[r_t + \gamma V(s_{t+1}) + \Delta_t(s_i)]
\end{equation}
for all states $s_i$ with $\param_i \in (0,1]$. $\Delta_t(s_i)$ can be interpreted as a regularization term composed of the difference between $V(s_i)$ and $V^{\param}(s_t)$. \\
To obtain this operator we first examine the update to $V(s_i)$ made during the trajectory at time step $t$:
\begin{equation}
\label{decompose}
\begin{split}
    V(s_i) &= V(s_i) + \alpha e_t(s_i) ( r_t + \gamma V(s_{t+1}) - V^{\param}(s_t)) \\
    &= V(s_i) + \alpha e_t(s_i) ( r_t + \gamma V(s_{t+1}) + \Delta_t(s_i) - V(s_i))
\end{split}
\end{equation}
where $\Delta_t(s_i) = (1-C_t(s_i))(V(s_i) - \widetilde{V}_t(s_i) )$ and  $C_t(s_i) = \param_i \prod_{p=i+1}^t (1-\param_p)$.
$\widetilde{V}_t(s_t)$ is a convex combination of all $V$ encountered in the trajectory, with the exception of $V(s_i)$, weighted by their respective contribution($\param$) to the estimate $V^{\param}(s_t)$. For example, if we consider updating $V(s_2)$ at $t=3$ and have the following $\param_1 = 0.9,\param_2 = 0.1,\param_3 = 0.1$, the value of $\widetilde{V}_3(s_2)$ will be mainly composed of $V(s_1)$. The main component of the error will be $r_t + \gamma V(s_4) - V(s_1)$. An example on how to obtain this decomposition can be found in the section \ref{decompose_app} of Appendix. 
In practice, one can observe an increase in the magnitude of this term with a decrease in \emph{eligibility}. This suggests that the biased updates contribute less to the learning. Bounding the magnitude of $\Delta$ to ensure contraction is the key concept used in this paper to ensure asymptotic convergence.

We consider the following assumptions to prove convergence:
The first assumption deals with the ergodic nature of the Markov chain. It is a common assumption in theoretical Reinforcement Learning that guarantees an infinite number of visits to all states, thereby avoiding chains with transient states.
\begin{assumption}
The Markov chain is ergodic.
\end{assumption}

The second assumption concerns the relative magnitude of the maximum and minimum reward, and allow us to bound the magnitude of the regularization term. 
\begin{assumption}
\label{assumption_rew}
We define $R_{\max}$ and $R_{\min}$ as the maximum and minimum reward in an MDP. All rewards are assumed to be positive and scaled in the range $[R_{\min},\widetilde{R}_{\max}]$ such that the scaled maximum reward $\widetilde{R}_{\max}$ satisfies the following:
\begin{equation}
    D\widetilde{R}_{\text{max}}  \leq R_{\text{min}} \qquad  D > \gamma 
\end{equation}
where $D \in (0.5,1]$ is a constant to be defined based on $\gamma$.
\end{assumption}
In theory, scaling the reward is reasonable as it does not change the optimal solution of the MDP \cite{van2016learning}. In practice, however, this may be constraining as the range of the reward may not be known beforehand. This assumption could be relaxed by considering the trajectory's information to bound $\Delta$. As an example, one could consider any physical system where transitions in the state space are smooth (continuous state space) and bounded by some Lipschitz constant in a similar manner than \cite{shah2018q}. 

As mentioned earlier, the key component of the proof is to control the magnitude of the term in Eq. \ref{decompose}: 
$\Delta_t(s_i) = (1-C_t(s_i))(V(s_i) - \widetilde{V}_t(s_i))$. As the eligibility of this update gets smaller, the magnitude of the term gets bigger. This suggests that not updating certain states whose eligibility is less than the threshold \emph{C} can help mitigate biased updates.
Depending on the values of $\gamma$ and $D$, we may need to set a threshold $C$ to guarantee convergence.
\begin{theorem}
\label{contraction_theorem}
Define $V_{\max} = \frac{\widetilde{R}_{\max}}{1-(\gamma+(1-D))}$ and $V_{\min} = \frac{R_{\min}}{1-(\gamma-(1-D))}$. $\T^{\param}: X \rightarrow X$ is a contraction operator if the following holds:
\begin{itemize}
    \item Let $X$ be the set of $V$ functions such that $\forall s \in \mathbb{S} \quad  V_{\min} \leq V(s) \leq V_{\max}$.  The functions $V$ are initialized in $X$.
    \item For a given $D$ and $\gamma$ we select $C$ such that $\Delta \leq (1-C)(V_{\max}-V_{\min}) \leq (1-D)V_{\min}$.
\end{itemize}
\end{theorem}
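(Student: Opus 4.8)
The plan is to establish the two bulleted properties in turn: first that the box $X=\{V:\ V_{\min}\le V(s)\le V_{\max}\ \forall s\}$ is forward-invariant under $\T^{\param}$ (so that "$\T^{\param}:X\to X$" is even meaningful and the iterates stay where we want them), and then that $\T^{\param}$ is a contraction with respect to $\norm{\cdot}_\infty$ on $X$. Combined with the ergodicity assumption (Assumption 1), which ensures every state — and hence, through the eligibility trace, every coordinate touched by the update — is revisited infinitely often, this casts the update of Algorithm~\ref{algo_rtd} as an asynchronous stochastic approximation of the form given in Eq.~\ref{stoch}, and the convergence claim then follows from \cite{tsitsiklis1994asynchronous}. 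So the substance is entirely in the contraction property.

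For invariance, I would fix $V\in X$ and a state $s_i$ that is actually updated at time $t$, i.e. one whose eligibility satisfies $C_t(s_i)\ge C$. Since $\widetilde{V}_t(s_i)$ is a convex combination of values $V(s_j)$ taken along the trajectory, it lies in $[V_{\min},V_{\max}]$, hence $|V(s_i)-\widetilde{V}_t(s_i)|\le V_{\max}-V_{\min}$ and therefore $|\Delta_t(s_i)| = (1-C_t(s_i))\,|V(s_i)-\widetilde{V}_t(s_i)| \le (1-C)(V_{\max}-V_{\min}) \le (1-D)V_{\min}$ by the choice of $C$. I would then combine this with $r_t\in[R_{\min},\widetilde{R}_{\max}]$ and $V(s_{t+1})\in[V_{\min},V_{\max}]$ and the elementary rearrangements of the definitions, $V_{\max}=\widetilde{R}_{\max}+\gamma V_{\max}+(1-D)V_{\max}$ and $V_{\min}=R_{\min}+\gamma V_{\min}-(1-D)V_{\min}$ (both well-defined since $\gamma<D$ and $\gamma+D<2$ keep the denominators positive), to conclude $V_{\min}\le \T^{\param}V(s_i)\le V_{\max}$.

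For contraction, take $V,U\in X$ and set $g=V-U$. Unrolling the recursion in Eq.~\ref{value} writes $V^{\param}(s_t)=C_t(s_i)V(s_i)+\sum_{j\ne i}c_j V(s_j)$ with nonnegative weights summing to one, and $\Delta_t(s_i)=V(s_i)-V^{\param}(s_t)$, so the reward terms cancel in $\T^{\param}V(s_i)-\T^{\param}U(s_i)$ and what remains is $\expect_{\pol}[\gamma\,g(s_{t+1})+(1-C_t(s_i))\,g(s_i)-\sum_{j\ne i}c_j\,g(s_j)]$. The total absolute weight on $g$ here is $\gamma+(1-C_t(s_i))+\sum_{j\ne i}c_j=\gamma+2(1-C_t(s_i))\le\gamma+2(1-C)$, which gives $\norm{\T^{\param}V-\T^{\param}U}_\infty\le(\gamma+2(1-C))\norm{V-U}_\infty$. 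It then remains to check the modulus is below one, i.e. $1-C<(1-\gamma)/2$; the hypothesis only pins down $1-C\le(1-D)V_{\min}/(V_{\max}-V_{\min})$, so I would substitute $V_{\min}=R_{\min}/(2-\gamma-D)$ and $V_{\max}=\widetilde{R}_{\max}/(D-\gamma)$ and use $\widetilde{R}_{\max}\ge R_{\min}$ from Assumption~\ref{assumption_rew}; after simplification the required inequality collapses to $D<1$ (and at $D=1$ the term $\Delta$ vanishes and $\T^{\param}$ is just the ordinary TD(0) operator with modulus $\gamma$). This closes the argument.

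The hard part is the contraction step, and specifically the realization that $\Delta_t(s_i)$ is not an additive noise term but a genuine function of the iterate $V$ through the convex combination $\widetilde{V}_t$, so one must bound its Lipschitz contribution, not merely its magnitude. The fact that rescues this is that a convex combination is $1$-Lipschitz in $\norm{\cdot}_\infty$, so the only thing that can spoil contraction is the prefactor $1-C_t(s_i)$ being too close to one — which is precisely what the eligibility threshold $C$ controls, and this is where the reward-scaling hypothesis (Assumption~\ref{assumption_rew}) earns its keep: through the relation it forces among $V_{\min}$, $V_{\max}$ and the admissible $C$, it is what guarantees $\gamma+2(1-C)<1$.
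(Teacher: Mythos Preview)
Your argument is correct and follows the same two–step template as the paper's appendix proof: first show the box $X$ is invariant under $\T^{\param}$, then establish a sup-norm contraction. Your invariance step is essentially identical to the paper's.

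The contraction step is where you diverge. The paper bounds $\Delta^V-\Delta^U$ directly by $(1-D)\lVert V-U\rVert_\infty$ (a step it asserts but does not spell out), obtaining modulus $\gamma+(1-D)<1$ straight from Assumption~\ref{assumption_rew}. You instead track the Lipschitz constant of the convex-combination map $V\mapsto V^{\param}(s_t)$ honestly, which yields the looser modulus $\gamma+2(1-C_t(s_i))\le \gamma+2(1-C)$, and then do the extra algebra of checking that the hypothesis on $C$, together with $\widetilde{R}_{\max}\ge R_{\min}$, forces $2(1-C)\le D-\gamma$ and hence $\gamma+2(1-C)\le D<1$. (Indeed, writing $a=D-\gamma$, $b=2-\gamma-D$, one has $(1-D)V_{\min}/(V_{\max}-V_{\min})\le (1-D)a/(b-a)=(D-\gamma)/2$, so your claimed collapse to ``$D<1$'' is correct.) Your route is more explicit about where the reward-scaling assumption enters the contraction estimate itself, at the cost of a slightly weaker modulus; the paper's route is terser but leaves the key Lipschitz bound on $\Delta$ unjustified.
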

We outline important details of the proof here. A full version can be found in Appendix \ref{proof}. For two sets of value functions $U,V \in X$:
\begin{equation}
    \max_{s}\expect_{\pol}[ \Delta^V(s) -  \Delta^U(s))] \leq \max_{s}\expect_{\pol}[(1-D)(V(s)-U(s))],\\
\end{equation}
where $\Delta^U$ is the $\Delta$ described in Eq. \ref{decompose} with value function $U$. For any $\gamma$ contraction operator, we can now guarantee that $\T^{\param}$ is a $\gamma + (1-D)$ contraction operator, where $\gamma + (1-D) < 1$ holds from Assumption \ref{assumption_rew}. We provide an example in Appendix \ref{set_gamma} to set $C$ based on $\gamma$ and $D$.
We can guarantee that $\V$ converges to a fixed point of the operator $\T^{\param}$ with probability $=1$ using Theorem 3 of \cite{tsitsiklis1994asynchronous}. The assumptions of Theorem 3 of \cite{tsitsiklis1994asynchronous} are discussed in section \ref{assumption} of Appendix.


\section{Related work}

One important similarity of RVFs is with respect to the online implementation of $\lambda$ return \cite{sutton1998reinforcement,dayan1992convergence}. Both RVF and online $\lambda$ returns have an eligibility trace form, but the difference is in RVF's capacity to ignore a state based on $\param$. In this paper we argue that this can provide more robustness to noise and partial observability.  The ability of RVF to emphasize a state is similar to the interest function in emphatic TD \cite{mahmood2015emphatic}, however, learning a state-dependant interest function and $\lambda$ remains an open problem. In contrast, RVF has a natural way of learning $\param$ by comparing the past and the future. The capacity to ignore states shares some motivations to semi-Markov decision process \cite{puterman1990markov}. Learning $\param$ and ignoring states can be interpreted as learning temporal abstraction over the trajectory in policy evaluation.
In reward shaping literature, several works such as Temporal Value Transport \cite{TVT}, Temporal Regularization \cite{thodoroff2018temporal}, Natural Value Approximator \cite{xu2017natural} attempt to modify the target to either enforce temporal consistency or to assign credit efficiently. This departs from our formulation as we consider estimating a value function directly by using the previous estimates rather than by  modifying the target. As a result of modifying the estimate, RVFs can choose to ignore a gradient while updating, which is not possible in other works. For example, in settings where the capacity is limited, updating on noisy states can be detrimental for learning.
Finally, RVF can also be considered as a partially observable method \cite{kaelbling1998planning}. However, it differs significantly from the literature as it does not attempt to infer the underlying hidden state explicitly, but rather only decides if the past estimates align with the target. We argue that inferring an underlying state may be significantly harder than learning to ignore or emphasize a state based on its value. This is illustrated in the next section.

\section{Experiments}
In this section, we perform experiments on various tasks to demonstrate the effectiveness of RVF.
First, we explore RVF robustness to partial observability on a synthetic domain. We then showcase RVF's robustness to noise on several complex continuous control tasks from the Mujoco suite \cite{todorov2012mujoco}. An example of policy evaluation is also provided in Appendix \ref{Policy_evaluation_RVF} as a reading.
\subsection{Partially observable multi-chain domain}
We consider the simple chain MDP described in Figure \ref{fig:toy MDP}. This MDP has three chains connected together to form a \emph{Y}. Each of the three chains (left of $S_1$, right of $S_2$, right of $S_3$) is made up of a sequence of states. The agent starts at $S_0$ and navigates through the chain. At the intersection $S_1$, there is a $0.5$ probability to go up or down. The chain on top receives a reward of $+1$ while the one at the bottom receives a reward of $-1$. Every other transition has a reward of $0$, unless specified otherwise.
\begin{figure}[h]
\centering
\begin{subfigure}[b]{.45\textwidth}    \includegraphics[width=0.9\textwidth,height=3.5cm]{fig/toy_MDP.png}
    \caption{Simple chain MDP.}
    \label{fig:toy MDP}
\end{subfigure}
\begin{subfigure}[b]{.45\textwidth}
    \centering
    \includegraphics[width=0.95\textwidth,height=3.75cm]{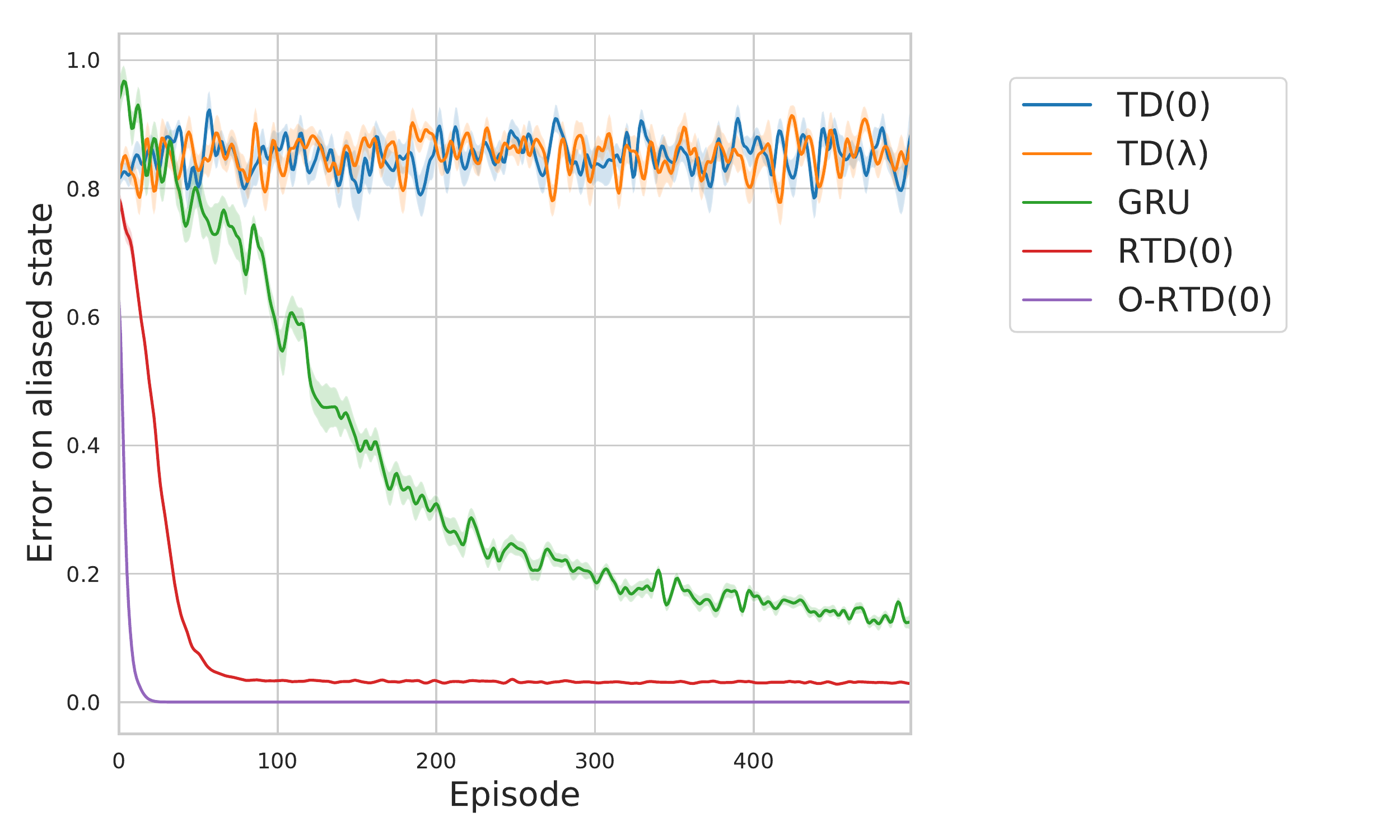}
    \caption{Results on the aliased Y-chain.}
    \label{fig:pomdp}
\end{subfigure}
\caption{(a) Simple chain MDP: The agent starts at state $S_0$ and navigates along the chain. States $S_4$ and $S_5$ are aliased. (b) Results on the aliased Y-chain on various methods, such as TD$(0)$, TD($\lambda$), GRU, RTD($0$), and Optimal RTD($0$) (O-RTD(0)) averaged over 20 random seeds.}
\end{figure}
We explore the capacity of recurrent learning to solve a partially observable task in the \emph{Y} chain. In particular, we consider the case where some states are aliased (share a common representation). The representation of the states $S_4$ and $S_5$ in Figure \ref{fig:toy MDP} are aliased. The goal of this environment is to correctly estimate the value of the aliased state $V^{\pol}(S_4)=0.9,V^{\pol}(S_5)=-0.9$ (due to the discount factor(0.9) and the length of each chain being 3). When TD methods such as TD($0$) or TD($\lambda$) are used, the values of the aliased states $S_4$ and $S_5$ are close to $0$ as the reward at the end of the chain is $+1$ and $-1$.  However, when learning $\param$ (emphasis function $\param$ is modelled using a sigmoid function), Recurrent Value Functions achieve almost no error in their estimate of the aliased states as illustrated in Figure \ref{fig:pomdp}. This can be explained by observing that $\param \rightarrow 0$ on the aliased state due to the fact that the previous values along the trajectory are better estimates of the future than those of the aliased state. As $\param \to 0$, $\vr(S_4)$ and $\vr(S_5)$ tend to rely on their more accurate previous estimates, $\vr(S_2)$ and $\vr(S_3)$. We see that learning to ignore certain states can at times be sufficient to solve an aliased task. We also compare with a recurrent version (O-RTD) where optimal values of $\param$ are used. In this setting,  $\param(S_1)=\param(S_2) = \param(s_3) = 1 $ and other states have $\param = 0$. Another interesting observation is with respect to Recurrent Neural Networks. RNNs are known to solve tasks which have partial observability by inferring the underlying state. LSTM and GRU have many parameters that are used to infer the hidden state. Correctly learning to keep the hidden state intact can be sample-inefficient. In comparison, $\param$ can estimate whether or not to put emphasis (confidence) on a state value using a single parameter. This is illustrated in Figure \ref{fig:pomdp} where RNNs take 10 times more episodes to learn the optimal value when compared to RVF. This illustrates a case where learning to ignore a state is easier than inferring its hidden representation. The results displayed in Figure \ref{fig:pomdp} are averaged over 20 random seeds. For every method, a hyperparameter search is done to obtain their optimal value. These can be found in Appendix \ref{hyper_toy}. We noticed that the emphasis function is easier to learn if the horizon of the target is longer, since a longer horizon provides a better prediction of the future. To account for this, we use $\lambda$-return as a target.
\subsection{Deep Reinforcement Learning}
Next, we test RVF on several environments of the Mujoco suite \cite{todorov2012mujoco}. We also evaluate the robustness of different algorithms by adding $\epsilon$ sensor noise (drawn from a normal distribution $\epsilon \sim N(0,1)$) to the observations as presented in \cite{zhang2018dissection}.
We modify the critic of A2C \cite{wu2017scalable} (R-A2C) and Proximal Policy Optimization (R-PPO) \cite{schulman2017proximal} to estimate the recurrent value function parametrized by $\theta$. We parametrize $\param$ using a seperate network with the same architecture as the value function (parametrized by $\omega$). We minimize the loss mentioned in Eq. \ref{loss_beta} but replace the target with generalized advantage function ($\V^{\lambda}$) \cite{schulman2015high} for PPO and TD($n$) for A2C. Using an automatic differentiation library (Pytorch \cite{paszke2017automatic}), we differentiate the loss through the modified estimate to learn $\theta$ and $\omega$. The default optimal hyperparameters of PPO and A2C are used. Due to the batch nature of PPO, obtaining the trajectory information to create the computational graph can be costly. In this regard, we cut the backpropagation after $N$ timesteps in a similar manner to truncated backpropagation through time. The number of backpropagation steps is obtained using hyperparameter search. Details can be found in Appendix \ref{deep_RL_appendix}. We use a truncated backprop of $N=5$ in our experiments as we found no empirical improvements for $N=10$. For a fairer comparison in the noisy case, we also compare the performance of two versions of PPO with an LSTM. The first version processes one trajectory every update. The second uses a buffer in a similar manner to PPO, but the gradient is cut after 5 steps as the computation overhead from building the graph every time is too large. The performance reported is averaged over 20 different random seeds with a confidence interval of $68\%$ displayed \footnote{The base code used to develop this algorithm can be found here \cite{pytorchrl}.}
\subsubsection{Performance}
\begin{figure}[H]
    \centering
    \includegraphics[width=\textwidth,height=6.9cm]{./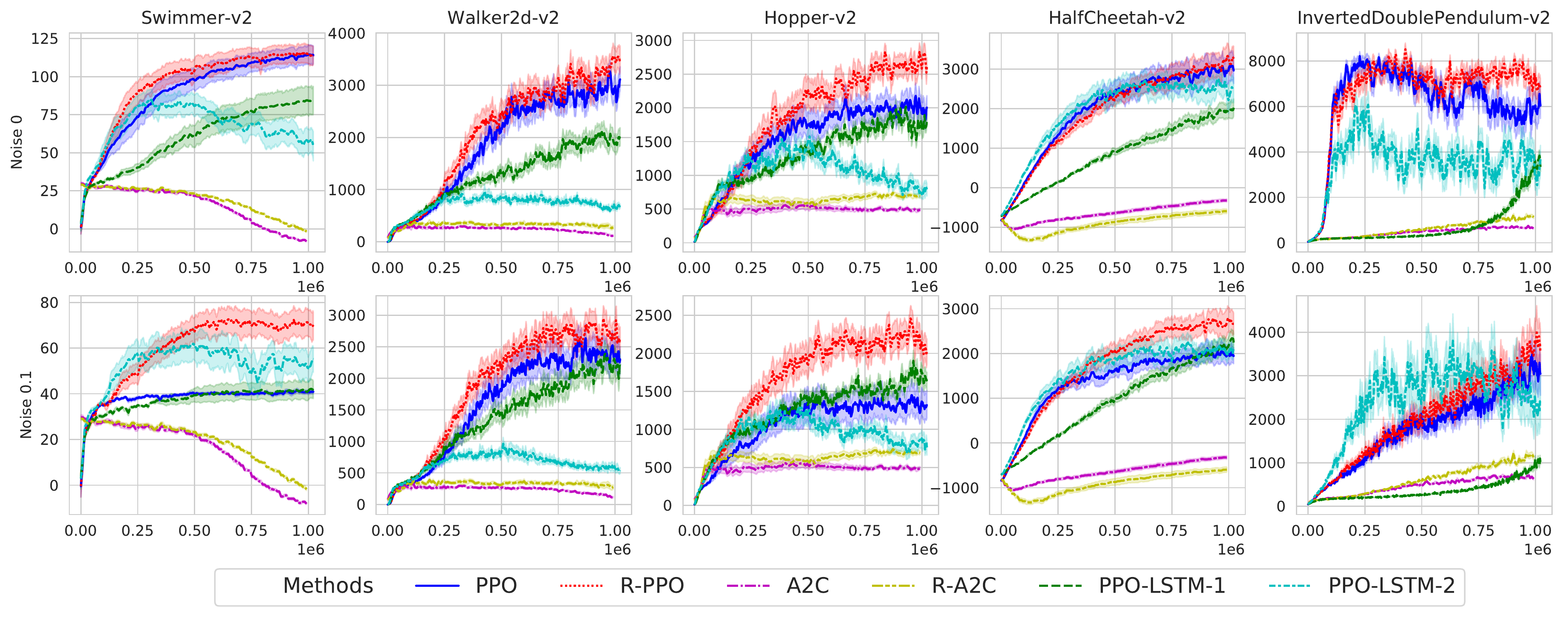}
    \caption{Performance on Mujoco tasks. Results on the first row are generated without noise  and on the second row by inducing a Gaussian noise ($\epsilon \sim \mathbb{N}(0,0.1)$) in the sensor inputs.}
    \label{fig:perf_mujoco_final}
\end{figure}
As demonstrated in Figure \ref{fig:perf_mujoco_final}, we observe a marginal increase in performance on several tasks such as Swimmer, Walker, Hopper, Half Cheetah and Double Inverted Pendulum in the fully observable setting. However, severe drops in performance were observed in the vanilla PPO when we induced partial observability by introducing a Gaussian noise to the observations. On the other hand, R-PPO (PPO with RVF) was found to be robust to the noise, achieving significantly higher performance in all the tasks considered. In both cases, R-PPO outperforms the partially observable models (LSTM). The mean and standard deviation of the emphasis function for both noiseless and noisy versions can be found in Appendix(\ref{fig:beta_mean}, \ref{fig:beta_std}). At the same time, A2C performance on both vanilla and recurrent versions (referred as R-A2C) were found to be poor. We increased the training steps on both versions and noticed the same observations as mentioned above once A2C started to learn the task. The performance plots, along with the mean and standard deviation of the emphasis function during training, can be found in Appendix (\ref{fig:a2c_no_noise_full}, \ref{fig:a2c_noise_full}, \ref{fig:beta_mean_a2c_mujoco},  \ref{fig:beta_std_a2c_mujoco}).
\subsubsection{Qualitative interpretation of the emphasis function $\param$ }
\emph{Hopper}: At the end of training, we can qualitatively analyze the emphasis function ($\param$) through the trajectory. We observe cyclical behaviour shown in Figure \ref{fig:cyle_hopper}, where different colours describe various stages of the cycle. The emphasis function learned to identify \emph{important states} and to ignore the others. One intuitive way to look at the emphasis function($\param$) is: \emph{If I were to give a different value to a state, would that alter my policy significantly?} We observe an increase in the value of the emphasis function ($\param$) when the agent must make an important decision, such as jumping or landing. We see a decrease in the value of the emphasis function ($\param$) when the agent must perform a trivial action. This pattern is illustrated in Figure \ref{fig:visual_hopper} and \ref{fig:cyle_hopper}. This behaviour is cyclic and repetitive, a video of which can be found in the following link\footnote{\url{https://youtu.be/0bzEcrxNwRw}{}}.
\begin{figure}[h]
\centering
\begin{subfigure}{.45\textwidth}
    \centering
    \begin{minipage}[b]{0.4\linewidth}
    \centering
    \includegraphics[width=\textwidth,height=1.9cm]{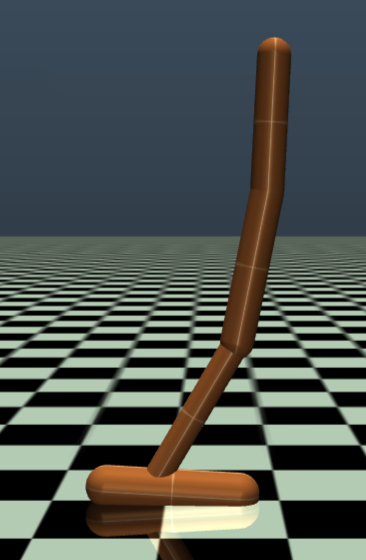}
    Phase 1: high $\param$
    \label{fig:phase_0}
    \end{minipage}
    \hspace{0.05cm}
    \begin{minipage}[b]{0.4\linewidth}
    \centering
    \includegraphics[width=\textwidth,height=1.9cm]{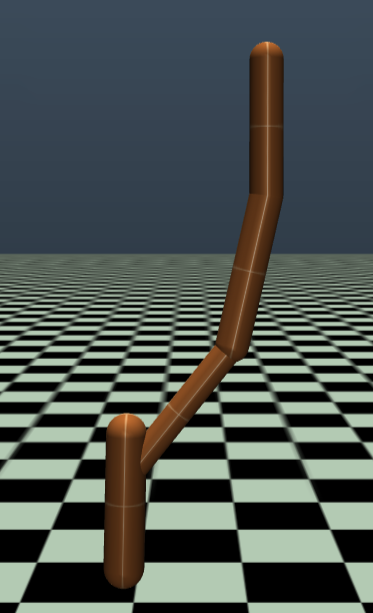}
    Phase 2: low $\param$
    \label{fig:phase_1}
    \end{minipage}
    \hspace{0.02cm}
    \begin{minipage}[b]{0.4\linewidth}
    \centering
    \includegraphics[width=\textwidth,height=1.9cm]{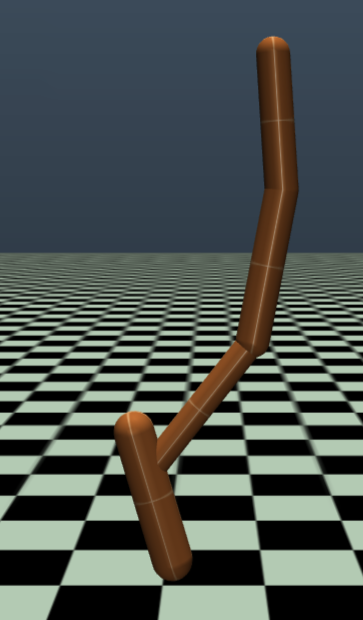}
    Phase 3: high $\param$
    \label{fig:phase_2}
    \end{minipage}
    \vspace{0.02cm}
    \hspace{0.02cm}
    \begin{minipage}[b]{0.4\linewidth}
    \centering
    \includegraphics[width=\textwidth,height=1.9cm]{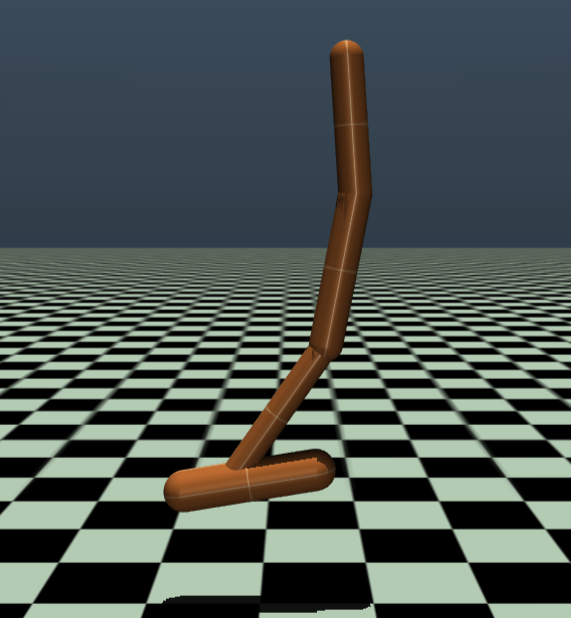}
     Phase 4: low $\param$
    \label{fig:phase_3}
    \end{minipage}
    \hspace{0.02cm}
    \caption{Cyclical behaviour of $\param$ on Hopper.}
    \label{fig:visual_hopper}
\end{subfigure}
\begin{subfigure}{.45\textwidth}
    \centering
    \includegraphics[width=\textwidth,height=4.7cm]{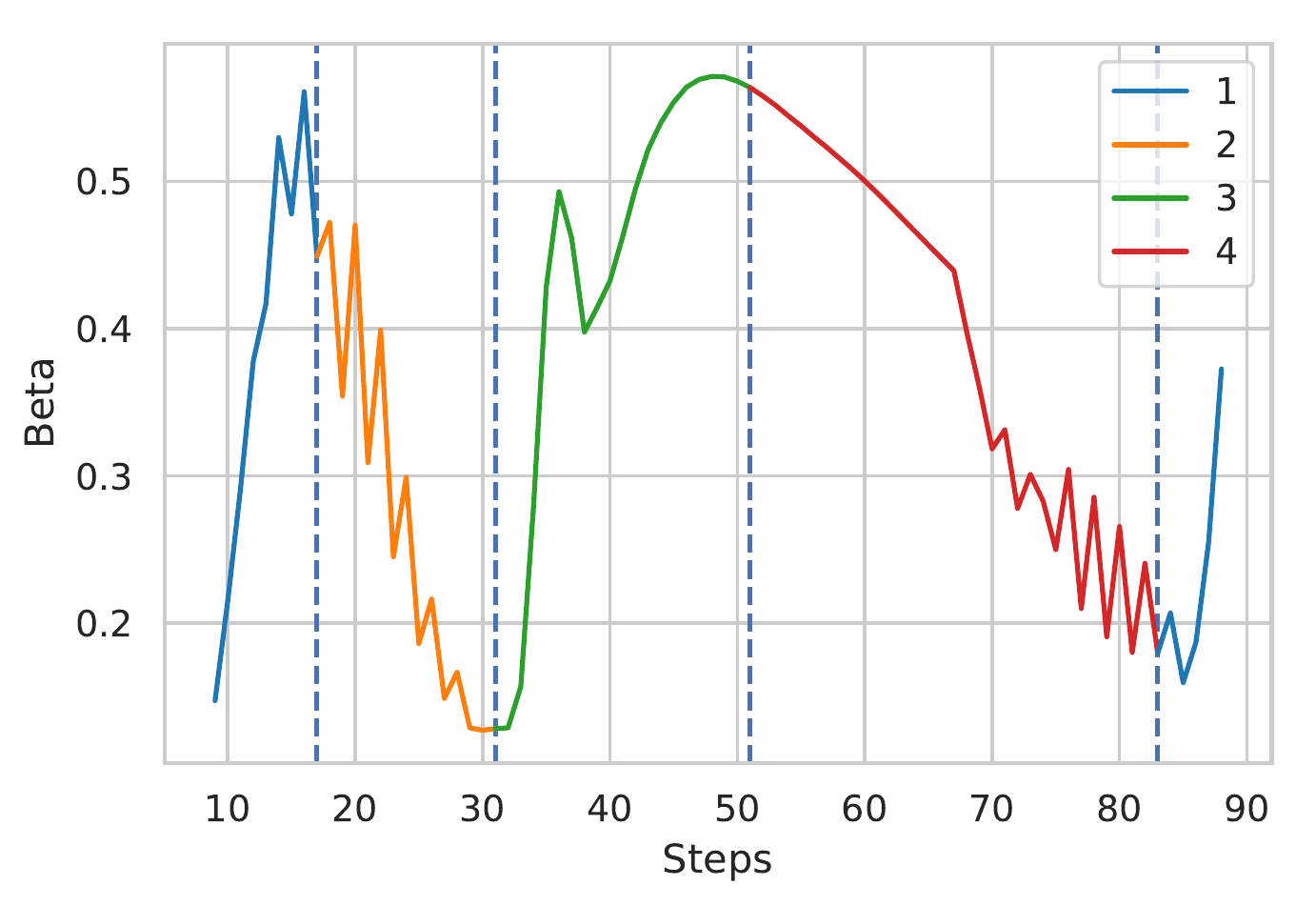}
    \caption{Behaviour of $\param$ through the trajectory.}
    \label{fig:cyle_hopper}
\end{subfigure}
\caption{(a) The emphasis function learns to emphasize key states in the environment. The emphasis function is high when the agent is making important decisions, such as landing or taking off (Phase 1 and Phase 3). The emphasis function is low when the agent is making decisions while it is in the air (Phase 2 and Phase 4). (b) Behaviour of the emphasis function along the trajectory for various phases described in (a) for one period. The emphasis function keeps repeating the behaviour.}
\end{figure}
\section{Discussions and Future Work}
\paragraph{Temporal Credit assignment:} As mentioned earlier, we can control the flow of gradient by using emphasis function $\param_{\omega}(s_t)$ and pass gradient to the states that contributed to the reward but are located several time-steps earlier. We could potentially do credit assignment on states that are temporally far away by forcing the emphasis function between these states to be close to $0$. This setting could be useful in problems with long horizons, such as lifelong learning and continual learning.
\paragraph{$\param$ as an interest function:}  In Reinforcement Learning, having access to a function quantifying the \emph{interest} \cite{mahmood2015emphatic} of a state can be helpful. For example, one could decide to explore from those states, prioritize experience replay based on those states, and use $\param$ to set the $\lambda$ to bootstrap from interesting states. Indeed, bootstrapping on states with a similar value (low $\param$) than the one estimated will only result in variance. The most informative updates come from bootstrapping on states with different values (high $\param$). We also believe $\param$ to be related to the concepts of bottleneck states \cite{tishby2011information} and reversibility.
\paragraph{Partially observable domain: } As demonstrated earlier, RVFs are able to correctly estimate the value of an aliased/noisy state using the trajectory's estimate. We believe that this is a promising area to explore because, as the experiments suggest, ignoring an uninformative state can sometimes suffice to learn its value function. This is in contrast to traditional POMDP methods which attempt to infer the belief state. Smoothing the output with a gating mechanism could also be useful for sequential tasks in Supervised Learning, such as regression or classification. 
\paragraph{Adjusting for the reward:} In practice, some environments in Reinforcement Learning have
a constant reward at every time step, potentially inducing bias in $\vr$ estimates. It would be possible to modify the RVF formulation to account for the reward that was just seen, such that $\vr(s_t) = \param V(s_t) + (1-\param) (\vr(s_{t-1}) - r_{t-1})$.
Whether or not subtracting the reward can reduce the bias will depend on the environment considered.
\paragraph{Conclusion:} In this work we propose Recurrent Value Functions to address variance issues in Model-free Reinforcement Learning. First, we prove the asymptotic convergence of the proposed method. We then demonstrate the robustness of RVF to noise and partial observability in a synthetic example and on several tasks from the Mujoco suite. Finally, we describe the behaviour of the emphasis function qualitatively.
\clearpage

\bibliography{library}

\clearpage

\appendix
\renewcommand\thefigure{\thesection\arabic{figure}}    
\setcounter{figure}{0}
\setcounter{section}{0}
\section{Appendix}
\subsection{Convergence Proof}
TD(0) is known to converge to the fixed point of the bellman operator \cite{sutton1988learning}:
\begin{equation}
\begin{split}
    \T \V(s_t) &= \expect_{s_{t+1} \sim \pol} [r(s_t) + \gamma \V(s_{t+1})]\\
\end{split}
\end{equation}
However, in practice we have access to a noisy version of the operator $\widetilde{\T}$ due to sampling process hence the noise term $w(t)$:
\begin{equation}
    w(t) = r_t + \gamma \V (s_{t+1}) - \expect_{s_{t+1} \sim \pol} [r + \gamma \V(s_{t+1})]
\end{equation}
\subsubsection{Derivation of $\widetilde{\V}$, Eq. \ref{decompose}}\label{decompose_app}
We take an example with $t=3$ and consider $i=2$:
\begin{equation}
    \begin{split}
        \vr(s_3) &= \param_3 \V(s_3) + (1-\param_3)\param_2 \V(s_2) + (1-\param_3)(1-\param_2)  \V(s_1)\\
        &= \V(s_2) - (1-(1-\param_3)\param_2)(\V(s_2) - \frac{\param_3 \V(s_3) + (1-\param_3)(1-\param_2)  \V(s_1)}{(1-(1-\param_3)\param_2)})\\
        &= \V(s_2) - (1-(1-\param_3)\param_2)(\V(s_2) - \widetilde{V}_t(s_i))
    \end{split}
\end{equation}
To see that $\widetilde{V}$ is a convex combination of the all the $\V$ encountered along the trajectory weight by $\param$ except $V(s_2)$ it suffices to see that:
\begin{equation}
    \begin{split}
        & \frac{\param_3  + (1-\param_3)(1-\param_2)  }{(1-(1-\param_3)\param_2)} = 1 \\
        &\equiv \param_3  + (1-\param_3)(1-\param_2)  = (1-(1-\param_3)\param_2) \\
        &\equiv \param_3 + (1-\param_3)\param_2 + (1-\param_3)(1-\param_2) = 1
    \end{split}
\end{equation}
where the last line is true because $\param \in (0,1]$
\subsubsection{Proof theorem 1}\label{proof}
\setcounter{theorem}{0}
\begin{theorem}
Let's define $V_{\max} = \frac{\widetilde{R}_{\max}}{1-(\gamma+(1-D))}$ and $V_{\min} = \frac{R_{\min}}{1-(\gamma-(1-D))}$. If the following holds
\begin{itemize}
    \item Let $X$ be the set of $\V$ functions such that $\forall s \in \mathbb{S} \quad  V_{\min} \leq \V(s) \leq V_{\max}$. We assume the functions are initialized in $X$.
    \item For a given $D$ and $\gamma$ we select C such that $(1-C)(V_{\max}-V_{\min}) \leq (1-D)V_{\min}$
\end{itemize}
then $\T^{\param}: X \rightarrow X$ is a contractive operator.
\end{theorem}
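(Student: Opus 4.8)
\emph{Plan.} The plan is to split the assertion ``$\T^{\param}$ is a contractive operator on $X$'' into the two claims it really bundles and prove them in turn: (i) \emph{invariance}, $\T^{\param}(X)\subseteq X$, so that iterating $\T^{\param}$ from the initial $V\in X$ never leaves the box; and (ii) \emph{contraction}, $\norm{\T^{\param}V-\T^{\param}U}_{\infty}\le k\,\norm{V-U}_{\infty}$ for some $k<1$ and all $U,V\in X$. In both parts the whole game is to control the extra term $\Delta_t(s_i)=(1-C_t(s_i))(V(s_i)-\widetilde V_t(s_i))$ relative to an ordinary TD$(0)$ backup; everything else is the standard Bellman estimate.

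\emph{Invariance.} Fix $V\in X$. Since $\widetilde V_t(s_i)$ is a convex combination of visited values $V(s_j)\in[V_{\min},V_{\max}]$ and $V(s_i)$ itself lies in $[V_{\min},V_{\max}]$, and since the eligibility threshold forces $C_t(s_i)\ge C$ on every state that is actually updated, $\lvert\Delta_t(s_i)\rvert\le(1-C)(V_{\max}-V_{\min})\le(1-D)V_{\min}$, which is exactly the second bullet. Bounding $\T^{\param}V(s_i)=\expect_{\pol}[r_t+\gamma V(s_{t+1})+\Delta_t(s_i)]$ termwise with $r_t\le\widetilde R_{\max}$, $V(s_{t+1})\le V_{\max}$, $\Delta_t(s_i)\le(1-D)V_{\max}$ gives $\T^{\param}V(s_i)\le\widetilde R_{\max}+(\gamma+(1-D))V_{\max}$, and this equals $V_{\max}$ precisely because $V_{\max}=\widetilde R_{\max}/\bigl(1-(\gamma+(1-D))\bigr)$; the denominator is positive by Assumption \ref{assumption_rew} ($D>\gamma$), which is also what makes $V_{\max}$ finite. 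Symmetrically, $r_t\ge R_{\min}$, $V(s_{t+1})\ge V_{\min}$ and $\Delta_t(s_i)\ge-(1-D)V_{\min}$ give $\T^{\param}V(s_i)\ge R_{\min}+(\gamma-(1-D))V_{\min}=V_{\min}$ by the definition of $V_{\min}$. Hence $\T^{\param}V\in X$.

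\emph{Contraction.} Take $U,V\in X$ and subtract; the reward terms cancel, leaving $\T^{\param}V(s_i)-\T^{\param}U(s_i)=\expect_{\pol}\bigl[\gamma\bigl(V(s_{t+1})-U(s_{t+1})\bigr)+\bigl(\Delta_t^{V}(s_i)-\Delta_t^{U}(s_i)\bigr)\bigr]$. The discounted term is the usual $\gamma$-contracting TD$(0)$ backup and contributes at most $\gamma\,\norm{V-U}_{\infty}$. For the regularization difference I would use the decomposition of Section \ref{decompose_app}: $\widetilde V_t(s_i)$ and $\widetilde U_t(s_i)$ are convex combinations of the visited $V$- resp.\ $U$-values with the \emph{same} weights (summing to one, omitting $s_i$), so with $g=V-U$ one gets $\Delta_t^{V}(s_i)-\Delta_t^{U}(s_i)=(1-C_t(s_i))\bigl(g(s_i)-\bar g\bigr)$, where $\bar g$ is that weighted average of $g$ over the earlier visited states. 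Its magnitude is at most $(1-C_t(s_i))\le(1-C)$ times $\max_s g(s)-\min_s g(s)\le 2\,\norm{V-U}_{\infty}$. Choosing $C$ large enough that $2(1-C)\le1-D$ — which, with the $D$-dependent tuning of $C$ allowed by Assumption \ref{assumption_rew}, is compatible with the bullet $(1-C)(V_{\max}-V_{\min})\le(1-D)V_{\min}$ — the regularization term costs at most $(1-D)\,\norm{V-U}_{\infty}$. Adding the two pieces yields $\norm{\T^{\param}V-\T^{\param}U}_{\infty}\le(\gamma+(1-D))\,\norm{V-U}_{\infty}$, and $\gamma+(1-D)<1$ is again Assumption \ref{assumption_rew}.

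\emph{Main obstacle.} The delicate point is the estimate on $\Delta_t^{V}-\Delta_t^{U}$: one must (a) invoke the convex-combination structure of $\widetilde V_t$ correctly (weights summing to one, excluding $s_i$, identical for $U$ and $V$, as derived in Section \ref{decompose_app}); (b) keep the eligibility threshold $C$ in force on exactly the updated states so that $1-C_t(s_i)\le1-C$; and (c) track the constant so the regularization cost really is $\le(1-D)\,\norm{V-U}_{\infty}$ rather than a larger multiple of $1-C$. This last point is where Assumption \ref{assumption_rew}'s reward scaling earns its keep: it both keeps $V_{\min}$ comparable to $V_{\max}-V_{\min}$ (so the bullet does not demand an unreasonably tiny $1-C$) and leaves the margin $D-\gamma>0$ needed for the final contraction factor to stay below one. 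By contrast, once the $\Delta$-bound is in hand the invariance step is a routine termwise estimate, and with contraction established the asymptotic-convergence claim is a direct appeal to the asynchronous stochastic-approximation result already cited.
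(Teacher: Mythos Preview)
Your two-step plan (invariance, then contraction in sup-norm) is exactly the paper's, and your invariance argument is essentially identical: bound $|\Delta_t(s_i)|\le(1-C)(V_{\max}-V_{\min})\le(1-D)V_{\min}\le(1-D)V_{\max}$ and read off the box bounds from the definitions of $V_{\max},V_{\min}$. The only substantive difference is in the contraction step. The paper does not go through your convex-combination decomposition of $\Delta^V-\Delta^U$; it simply writes, in one line,
\[
\norm{\T^{\param}V-\T^{\param}U}_{\infty}\le\max_{s,s'}\expect_{\pol}\bigl[\gamma(V(s)-U(s))+(1-D)(V(s')-U(s'))\bigr]\le(\gamma+(1-D))\norm{V-U}_{\infty},
\]
treating the passage from $\Delta^V(s')-\Delta^U(s')$ to $(1-D)(V(s')-U(s'))$ as immediate from the hypothesis on $C$. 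Your route is more explicit and arguably more honest about what is being used, but it costs you an \emph{extra} side condition $2(1-C)\le 1-D$ that is not among the theorem's stated hypotheses; as written, you prove a slightly weaker statement (the theorem only assumes the bullet $(1-C)(V_{\max}-V_{\min})\le(1-D)V_{\min}$, and nothing in it forces $2(1-C)\le 1-D$). If you want to match the paper's claim exactly you should either argue directly that the bullet alone suffices to bound $\Delta^V-\Delta^U$ by $(1-D)\norm{V-U}_{\infty}$, as the paper asserts, or acknowledge that the paper's one-line bound is doing some work that your decomposition makes visible but does not quite recover under the same hypotheses.
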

\begin{proof}
The first step is to prove that $\T^{\param}$ maps to itself for any noisy update $\widetilde{\T^{\param}}$. 
From 2) we know that
$(1-C) (V_{\max} - V_{\min}) < D V_{\min} \leq D V_{\max}$ we can then deduce that
\begin{equation}
    \begin{split}
        \widetilde{\T^{\param}}\V(s) &\leq \widetilde{R}_{\max} + \gamma V_{\max} +(1-C)(V_{\max} - V_{\min}) \\
        &\leq \widetilde{R}_{\max} + (\gamma+(1-D)) V_{\max} \\
        &\leq V_{\max}
    \end{split}
\end{equation}
and
\begin{equation}
    \begin{split}
        \widetilde{\T}^{\param}\V(s) &\geq R_{\min} + \gamma V_{\min} + (1-C)(V_{\min} - V_{\max})\\
        &\geq R_{\min} + (\gamma-(1-D)) V_{\min}\\
        &\geq V_{min}\\
    \end{split}
\end{equation}
The next step is to show that $\T^{\param}$ is a contractive operator:
\begin{equation}
\begin{split}
    &\norm{\T^{\param}V - \T^{\param}U}_{\infty}\\
    &\leq \max_{s,s'}\expect_{\pol}[\gamma V(s) + \Delta^V(s') - (\gamma U(s) + \Delta^U(s'))] \\
    &\leq \max_{s,s'}\expect_{\pol}[\gamma (V(s)-U(s)) + (1-D)(V(s') -U(s'))]\\
    &\leq \max_{s}\expect_{\pol}[((1-D)+\gamma)(V(s) - U(s))]\\
    &\leq ((1-D)+\gamma) \norm{V - U}_{\infty}
\end{split}
\end{equation}
and from the assumption we know that $(1-D)+\gamma < 1$.
\end{proof}
\subsubsection{Selecting C}\label{set_gamma}
To select C based on $\gamma$ and $D$ it suffice to solve analytically for:
\begin{equation}
\begin{split}
    &(1-C)(V_{\max} - V_{\min}) \leq (1-D)V_{\min} \\
    &\equiv (1-C)\frac{\widetilde{R}_{\max}}{1-(\gamma + (1-D)} \leq ((1-D)+(1-C)) \frac{R_{\min}}{1-(\gamma - (1-D)}\\
    &\equiv \frac{(1-C)(1-(\gamma-(1-D)))}{(1-(\gamma + (1-D))((1-D)+(1-C)} \widetilde{R}_{\max}\leq R_{\min} \\
    &\equiv \frac{D(1-C)(1-(\gamma-(1-D)))}{(1-(\gamma + (1-D))((1-D)+(1-C)} R_{\min} \leq R_{\min}
\end{split}
\end{equation}
which is satisfied only if:
\begin{equation}
\begin{split}
    \frac{D(1-C)(1-(\gamma-(1-D)))}{(1-(\gamma + (1-D))((1-D)+(1-C)} &\leq 1
\end{split}
\end{equation}
As an example for $D = 0.8$ and $\gamma = 0.5$ any $C\geq 0.33$ satisfies this inequality.
\subsubsection{Assumption asynchronous stochastic approximation}\label{assumption}
We now discuss the assumptions of theorem 3 in \cite{tsitsiklis1994asynchronous} 
\paragraph{Assumption 1:} Allows for delayed update that can happen in distributed system for example. In this algorithm all $\V$'s are updated at each time step $t$ and is not an issue here.
\paragraph{Assumption 2: } As described by \cite{tsitsiklis1994asynchronous} assumption 2 ``allows for the possibility of deciding whether to update a particular component $x_i$ at time $t$, based on the past history of the process.''. This assumption is defined to accommodate for $\epsilon$-greedy exploration in Q-learning. In this paper we only consider policy evaluation hence this assumptions holds.
\paragraph{ Assumption 3:} The learning rate of each state $s \in \mathbb{S}$ must satisfy Robbins Monroe conditions such that there exists $C \in \mathbb{R}$:
\begin{equation}
    \begin{split}
        \sum_{i=0}^{\infty} \alpha_t(s) 
e_t(s) &= \infty \quad \text{w.p.1}\\
        \sum_{i=0}^{\infty} (\alpha_t(s) e_t(s))^2 &\leq C      \end{split}
\end{equation}
This can be verified by assuming that each state gets visited infinitely often and an appropriate decaying learning rate based on $\#_s$ (state visitation count) is used (linear for example). 
\paragraph{Assumption 5:} This assumption requires $\T$ to be a contraction operator. This has been proven in theorem 1 of this paper.

\subsection
{Derivation of $\param$ update rule}
\label{beta_update_rule}

We wish to find $\param = \sigma(\omega)$ minimizing the loss : 

\begin{align}
    \min \ \ \frac{1}{2}(V^{\pol}(s_t) - \vrb(s_t))^2  \\
\end{align}
Taking the derivative of the R.H.S of 2 gives

\begin{align}
    \frac{d}{d\omega_{s_t}} \ \ &\bigg( \frac{1}{2}(V^{\pol}(s_t) - \vrb(s_t))^2  \bigg) =(V^{\pol}(s_t) - \vrb(s_t)) \big(\frac{d (V^{\pol}(s_t) - \vrb(s_t))}{d\omega_{s_t}} \big) 
     \big) )  \ \ \ \text{by chain rule}
\end{align}

We know that $\frac{d}{d\omega} \sigma(\omega_{s_t})=\sigma(\omega_{s_t})(1 - \sigma(\omega_{s_t}))$ \newline
and $\frac{d}{d\sigma(\omega_{s_t})} (V^{\pol}(s_t) - \vrb(s_t)) = \frac{d}{d\sigma(\omega_{s_t})} \bigg(\sigma(\omega_{s_t})\V(s_t) + \big(1 - \sigma(\omega_{s_t})\big)\vrb(s_{t-1}) - V^{\pol}(s_t)\bigg) = \V(s_t) - \vrb(s_{t-1})$ \newline \newline

Therefore, 
\begin{align}
    &\frac{d}{d\omega} \ \ \bigg( \frac{1}{2}(V^{\pol}(s_t) - \vrb(s_t))^2 + \lambda \sigma(\omega_{s_t}) \bigg) =\\
    &(V^{\pol}(s_t) - \vrb(s_t)) (\V(s_t) - \vrb(s_{t-1}))\bigg(\sigma(\omega_{s_t})(1 - \sigma(\omega_{s_t}))\bigg)
     + \lambda \bigg(\sigma(\omega_{s_t})(1 - \sigma(\omega_{s_t}))\bigg)  = \\
     &\bigg(\sigma(\omega_{s_t})(1 - \sigma(\omega_{s_t}))\bigg) \bigg((V^{\pol}(s_t) - \vrb(s_t))(\V(s_t) - \vrb(s_{t-1})) \bigg)
\end{align}

Finally, the update rule is simply a gradient step using the above derivative.
\subsection{Experiment}
\subsubsection{Policy Evaluation}
\label{Policy_evaluation_RVF}
\begin{figure}[h]
    \centering
    \includegraphics[width=0.5\textwidth, height=6cm]{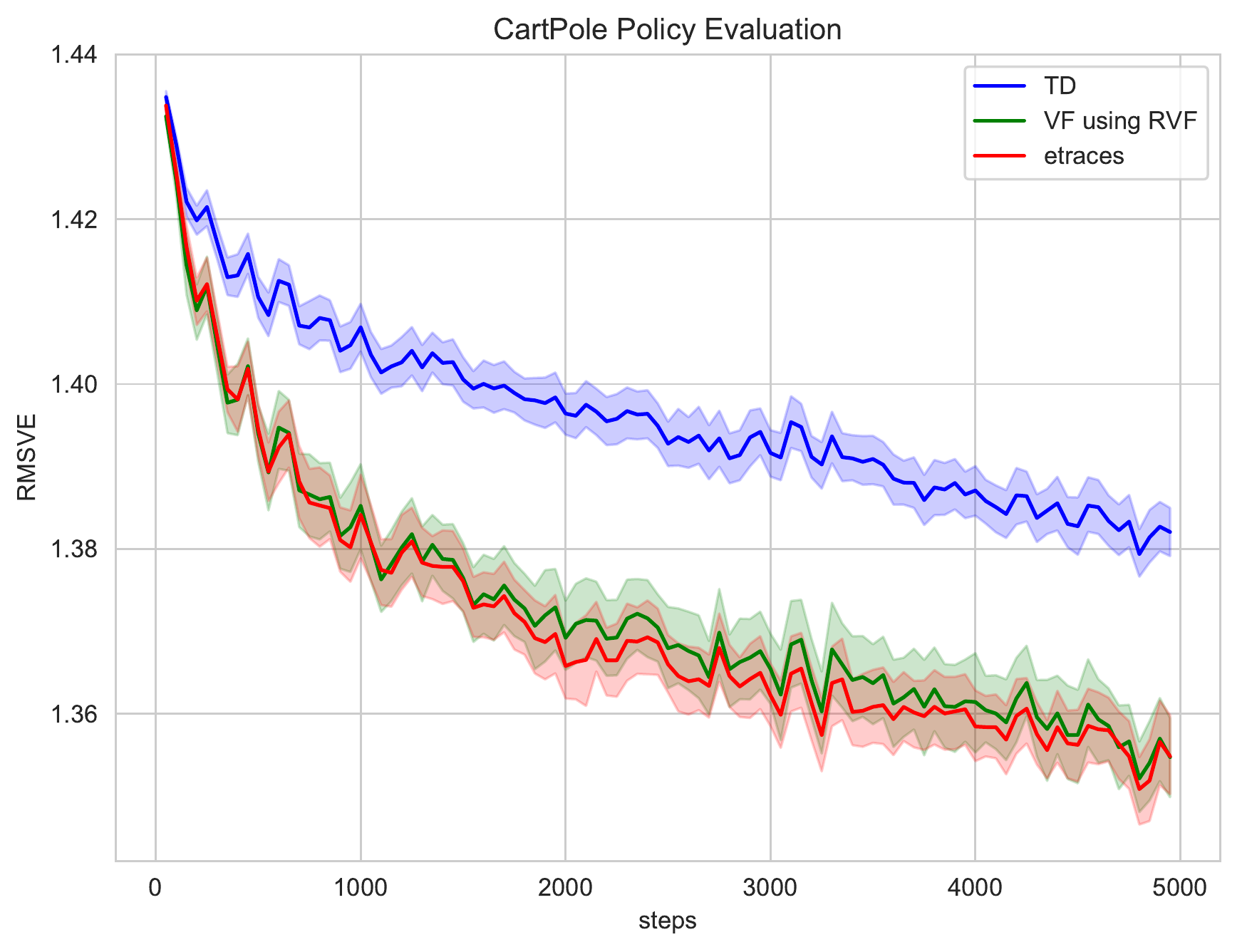}
    \caption{Comparison of RMSVE of various methods. Value Function estimated using RVF has lower error compared to Value Function estimated using TD and comparable to Eligibility traces}
    \label{fig:policy_eval}
\end{figure}
In this experiment, we perform a policy evaluation on CartPole task. In this environment the agent has to balance the pole on a cart. The agent can take one of the two actions available which would move the cart to either left or right. A reward of +1 is obtained for every time step and an episode is terminated if the cart move too far from the center or the pole dips below a certain angle.

In this task, we use a pretrained network to obtain the features that represent the underlying state. We train a linear function approximator using those features to estimate the value function. This experimental setup is similar to \cite{chung2018two} but the features in our case are obtained through a pretrained network instead of training a separate network. The samples are generated following a fixed policy and the same samples were used across all the methods to estimate value function. Each sample consists of 5000 transitions and the results are averaged over 40 such samples. We calculated the optimal value $V_\pi$ using a Monte Carlo estimate for 2000 random states following the same policy. We use the trained linear network to predict value function on these 2000 states. Once we get the predictions we calculate their Root Mean Square Value Error (RMSVE). The best hyperparameters were obtained through hyperparameter search for each method separately. The optimal learning rate was found to be $0.005$ for TD and RVF while $0.0005$ for eligibility traces when did a search in \{$0.0001, 0.0005, 0.001, 0.005, 0.01, 0.05$\}. The optimal beta learning rate was found to be $0.005$ when we searched in \{$0.001, 0.0001, 0.01$\} and the optimal lambda for eligibility traces was found to be $0.9$ when searched in \{$0.1, 0.4, 0.7, 0.9$\}. The RMSVE on various methods such as TD, eligibility traces - online TD($\lambda$) and the value functions of the state obtained using RVF algorithm are reported in Figure \ref{fig:policy_eval}. We notice that the the value function learned through RVF algorithm has approximately the same error as the value function learned through eligibility traces. Both RVF and eligibility traces outperform TD methods.
\subsubsection{Hyper-parameter TOY MDP}
\label{hyper_toy}
For every method the learning rate and $\lambda$ was tuned for optimal performance in the range $[0,1]$.\\
For RTD a learning rate of 0.5 for the value function and 1 for the beta function was found to be optimal with a lambda of 0.9.\\
For the GRU model we explored different amount of cell($\{1,5,10,15,20,25\}$) to vary the capacity of the model. The optimal number of hidden cell we found is 10, learning rate 0.5 and lambda 0.9. 
\subsubsection{Deep Reinforcement Learning}
\label{deep_RL_appendix}
The best hyperparameters were selected on 10 random seeds. 
\paragraph{PPO:} The following values were considered for the learning rate $\{3E-05,6E-05,9E-05,3E-04,6E-04\}$ and $N = \{2,5,10\}$.  The optimal values for learning rate is the same one obtained in the original PPO paper $3E-4$ and $N=5$.
We also compare with a larger network for PPO to adjust for the additional parameter of $\param$ the performance of vanilla PPO were found to be similar. In terms of computational cost, RVF introduce a computational overhead slowing down training by a factor of 2 on a CPU(Intel Skylake cores 2.4GHz, AVX512) compared to PPO. The results are reported on 20 new random seeds and a confidence interval of $68\%$ is displayed. 
\paragraph{A2C:} The following values were considered for the learning rate $\{1E-05,1E-04,1E-03,1E-02\}$ and the best learning rate was found to be $1E-04$. We tested bootstrapping on $5,10,15$ steps, we found no empirical improvements between $5$ and $10$ but noticed a significant drop in performance for bootstrapping $>10$. We believe that, it is due to the increase in variance of the target as we increase the steps of the bootstrapping. The best hyperparameters were found by averaging across 10 random seeds. The results reported are averaged across 20 random seeds. The confidence interval of $95\%$ is displayed.
\begin{figure}[h]
    \centering
    \includegraphics[width=\textwidth, height=4cm]{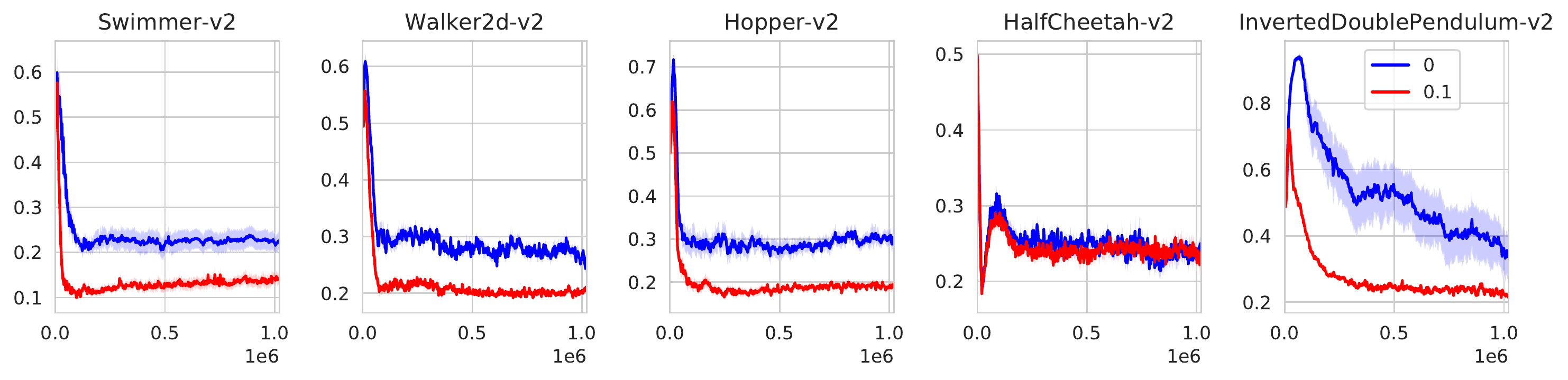}
    \caption{Mean beta values using recurrent PPO on Mujoco domains}
    \label{fig:beta_mean}
\end{figure}
\begin{figure}[h]
    \centering
    \includegraphics[width=\textwidth, height=4cm]{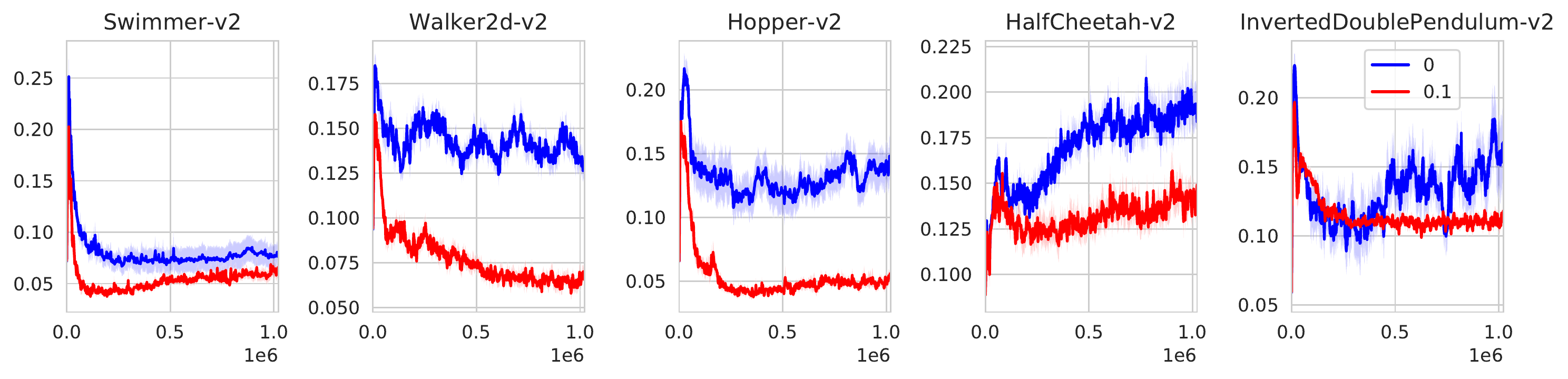}
    \caption{Standard deviation of beta using recurrent PPO on Mujoco domains}
    \label{fig:beta_std}
\end{figure}

\begin{figure}[h]
    \centering
    \includegraphics[width=\textwidth, height=4cm]{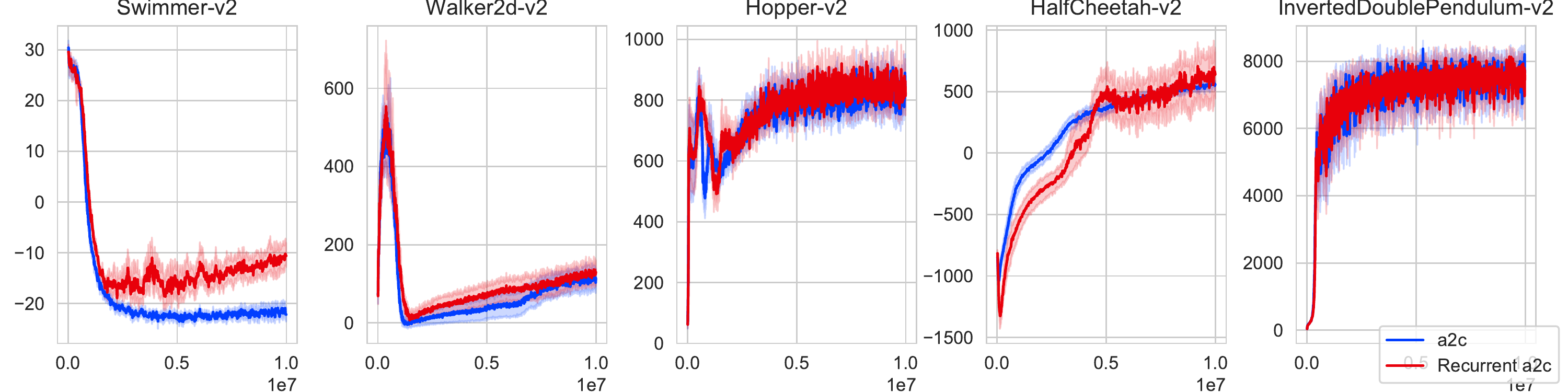}
    \caption{Performance of a2c and recurrent a2c on Mujoco tasks without noise in observations for 10M steps}
    \label{fig:a2c_no_noise_full}
\end{figure}
\begin{figure}[h]
    \centering
    \includegraphics[width=\textwidth, height=4cm]{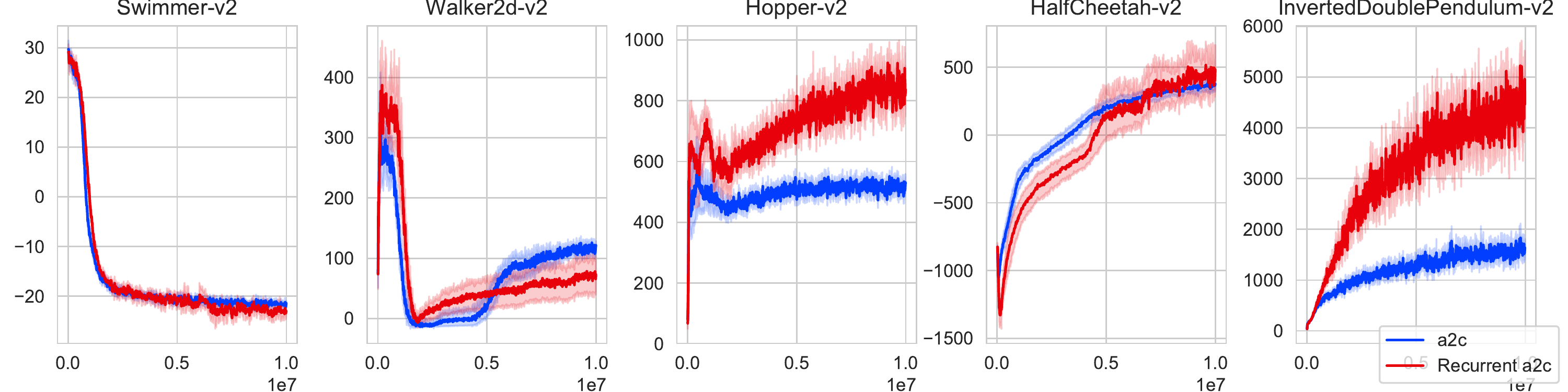}
    \caption{Performance of a2c and recurrent a2c on Mujoco tasks with a Gaussian noise(0.1) in observations for 10M steps}
    \label{fig:a2c_noise_full}
\end{figure}

\begin{figure}[H]
    \centering
    \includegraphics[width=\textwidth, height=4cm]{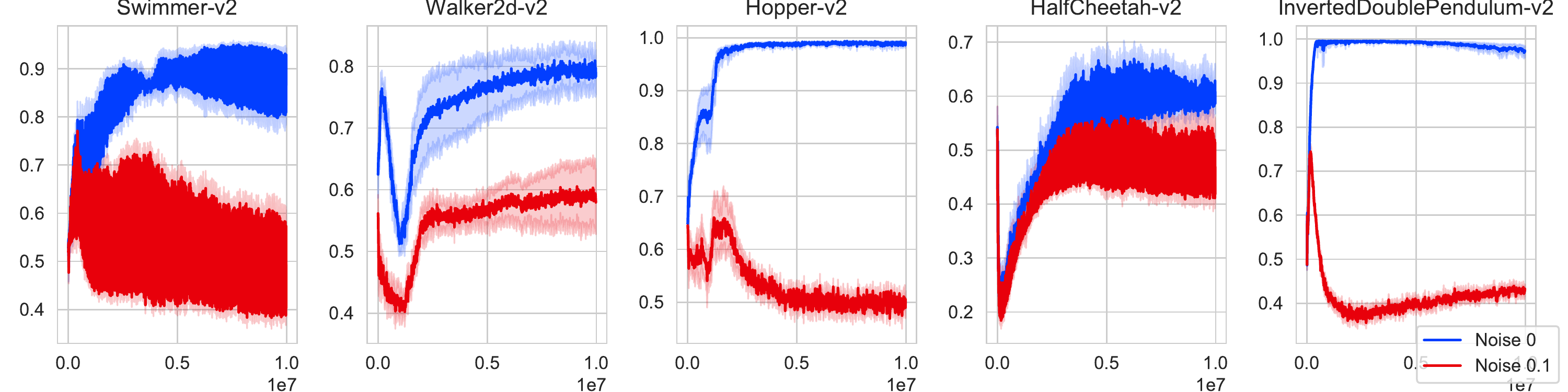}
    \caption{The mean of the emphasis function on various Mujoco tasks with and without noise plotted against the number of updates}
    \label{fig:beta_mean_a2c_mujoco}
\end{figure}

\begin{figure}[H]
    \centering
    \includegraphics[width=\textwidth, height=4cm]{fig/beta_mean_a2c_mujoco_sea_steps_5_noise_obs_0_1.pdf}
    \caption{The standard deviation of the emphasis function on various Mujoco tasks with and without noise plotted against the number of updates}
    \label{fig:beta_std_a2c_mujoco}
\end{figure}
\end{document}